\documentclass[conference,a4paper]{IEEEtran}
\IEEEoverridecommandlockouts

\usepackage[T1]{fontenc}
\usepackage[utf8]{inputenc}
\usepackage{microtype}
\usepackage{amsmath,amssymb,amsthm}
\newtheorem{lemma}{Lemma}
\newtheorem{corollary}{Corollary}
\newtheorem{remark}{Remark}

\usepackage{graphicx}
\usepackage{stfloats}

\usepackage{booktabs}
\usepackage{cite}
\usepackage{url}
\usepackage[colorlinks=true,citecolor=blue,linkcolor=black,urlcolor=blue]{hyperref}
\hypersetup{pdftitle={Valid for Free: Homophily-Gated Conformal Prediction for Training-Free Node Classification with Tabular Foundation Models}}

\begin{document}

\title{Valid for Free: Homophily-Gated Conformal Prediction for Training-Free Node Classification with Tabular Foundation Models}

\author{
\IEEEauthorblockN{Nguyen Duy Long\textsuperscript{1}, Phung Minh Hien\textsuperscript{1},
Nguyen Trong Viet\textsuperscript{2}, and Nguyen Thai Anh\textsuperscript{3,*}}
\IEEEauthorblockA{\textsuperscript{1}Faculty of Information Technology,
Posts and Telecommunications Institute of Technology, Hanoi, Vietnam\\
\textsuperscript{2}Faculty of Engineering, School of Computer Science,
The University of Sydney, Sydney, Australia\\
\textsuperscript{3}Faculty of Information Technology, Van Lang School of
Technology, Van Lang University, Ho Chi Minh City, Vietnam\\
Email: longnd.b24cn364@stu.ptit.edu.vn, hienpm.b24cn199@stu.ptit.edu.vn, trng0882@uni.sydney.edu.au,
anh.nt@vlu.edu.vn\\
\textsuperscript{*}Corresponding author: Nguyen Thai Anh}
}

\maketitle

\begin{abstract}
Tabular foundation models (TFMs) can classify the nodes of a graph without training on it, by reading node and neighborhood features as table rows next to labeled context rows.
Work in this line reports predictive performance, not conformal coverage or prediction-set size.
To our knowledge, we give the first reliability study of the setting, with TabICL as the TFM and half of each graph as labeled context.
As for any predictor fixed before calibration, a frozen in-context predictor makes split conformal prediction exactly valid in finite samples, with no training, validation fold, or tuning on the target graph.
An audit across ten graphs then shows that the training-free TabICL posterior has lower expected calibration error (ECE) than GCN with temperature scaling (GCN+TS) on nine of them.
Its mean ECE over the ten graphs is 0.019, about 35 percent below the 0.029 of GCN+TS.
We also introduce HG-DAPS, a training-free diffusion score whose homophily gate reads only the in-context labels, so the guarantee still holds.
Relative to adaptive prediction sets (APS), it reduces mean set size by 5.8 to 17.1 percent on six homophilous graphs and changes it by under 1 percent on four heterophilous ones.
On two binary, class-imbalanced graphs, a pre-registered trap case shows that gating on raw rather than adjusted homophily lowers coverage among low-homophily nodes by 0.27 and 0.12.
Marginal coverage stays at the nominal 0.90 and masks this drop.
\end{abstract}

\begin{IEEEkeywords}
foundation models, conformal prediction, node classification, uncertainty quantification, trustworthy machine learning
\end{IEEEkeywords}

\section{Introduction}
Tabular foundation models (TFMs) such as TabPFN~\cite{tabpfn} and TabICL~\cite{tabicl} are pretrained on synthetic tabular tasks and classify a new table by conditioning on its labeled rows, with no gradient updates.
Recent work applies them to node classification without any training on the target graph~\cite{tabgfm,g2tfm}.\footnote{Training-free: no parameter update on the target graph, with labeled context nodes. Prior work calls this setting zero-shot~\cite{tabgfm} or in-context learning~\cite{g2tfm}.}
Each node becomes a row of its own and its neighborhood features, and a frozen TFM reads the labeled context rows and predicts the remaining nodes.
On standard benchmarks this recipe is competitive with trained graph neural networks (GNNs), but reported evaluations focus on accuracy.

Accuracy alone is not sufficient for deployment.
A fraud-flagging system must know when to refer a node to a human, and that decision rests on the reliability of the model's confidence.
Conformal prediction has been used to measure this reliability for vision foundation models~\cite{cvfmconformal}, but not for TFMs on graphs.
The gap is sharper for a training-free model, since a reliability step that needs training or a validation fold would forfeit its main appeal.

Two aspects make the graph question non-routine.
First, split conformal prediction needs calibration and test nodes to be exchangeable.
For a trained GNN this holds when training, early stopping, and tuning never depend on the calibration split~\cite{daps,cfgnn}, a condition that has to be checked for each protocol.
Second, diffusing conformity scores along edges helps when neighbors tend to share labels (homophily), but on a heterophilous graph it mixes evidence across unlike classes.
Deciding whether to diffuse therefore needs a homophily estimate that does not depend on the calibration split.
The intuitive statistic, raw edge homophily, is inflated whenever the chance same-label rate is high, as with few or imbalanced classes~\cite{platonov}.
A gate built on it can be wrong exactly where caution matters.

Our contributions are as follows (pipeline in Fig.~\ref{fig:overview}).
\begin{itemize}
\item \textbf{Valid for free (Lemma~\ref{lem:free}).}
We observe that split conformal prediction is exactly valid for a frozen in-context predictor by the transductive exchangeability argument of~\cite{daps,cfgnn}, with no training protocol to audit and no validation fold.
\item \textbf{A pre-registered audit.}
On nine of ten graphs, training-free TabICL has lower expected calibration error (ECE) than temperature-scaled GCN, with no post-hoc step.
\item \textbf{HG-DAPS.}
We gate DAPS-style score diffusion~\cite{daps} by the adjusted homophily of the context labels alone, with no tuning.
Relative to APS, it shrinks mean set size on the six homophilous graphs and changes it by under 1\% on the four heterophilous ones.
\item \textbf{A trap case.}
On two binary, class-imbalanced graphs, gating on raw edge homophily lowers coverage among low-homophily nodes, a drop that marginal coverage hides.
\end{itemize}

% ---------------------------------------------------------------- Figure 1
\begin{figure*}[!htb]
\centering
\includegraphics[width=\textwidth, keepaspectratio]{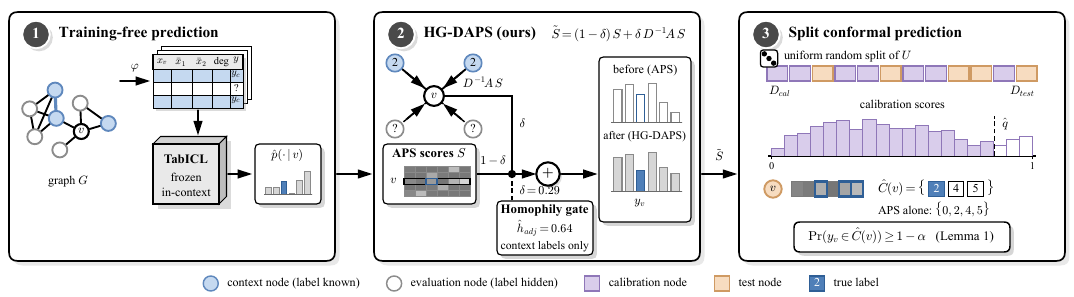}
\caption{Overview. (1)~A featurizer $\phi$ maps each node to a table row (reduced own features, one- and two-hop neighborhood means, one-hop extrema, log-degree), and a frozen TabICL reads the labeled context rows to return $\hat p(\cdot \mid v)$. (2)~HG-DAPS mixes the score matrix $S^{\mathrm{full}}$ (APS rows on $U$, one-hot LAC rows on $C$, written $S$ in the panel) with its row-normalized neighbor average as in \eqref{eq:diff}. The weight $\delta$ comes from a homophily gate that reads context labels only. On CiteSeer, $\hat h_{\mathrm{adj}}=0.64$ gives $\delta=0.29$, and the true-label score of the example node falls from $0.72$ to $0.55$. (3)~Only then is $U$ split at random into $\mathcal{D}_{\mathrm{cal}}$ and $\mathcal{D}_{\mathrm{test}}$, and $\hat C(v)=\{k:\tilde S_{v,k}\le\hat q\}$ keeps three labels where APS alone keeps four. Stages 1 and 2 do not depend on this split, so Lemma~\ref{lem:free} gives $\Pr(y_{v^\ast}\in\hat C(v^\ast))\ge 1-\alpha$ for a random test node $v^\ast$. Values are from one run (CiteSeer, context seed~0, $\alpha=0.10$, classes indexed from 0).}
\label{fig:overview}
\end{figure*}

% =====================================================================
\section{Related Work}
\label{sec:related}

\textbf{Tabular foundation models on graphs.}
Besides node-table methods~\cite{tabgfm,g2tfm}, GraphPFN pretrains a graph-specific prior-fitted network~\cite{graphpfn}.
These approaches reach competitive accuracy without task-specific training, but none reports conformal coverage or prediction-set quality.
Calibration appears only in a concurrent two-dataset ablation~\cite{logic2026}.

\textbf{Conformal prediction on graphs.}
Split conformal prediction gives distribution-free marginal coverage under exchangeability~\cite{vovk2005}, with adaptive prediction sets (APS)~\cite{aps}, regularized APS (RAPS)~\cite{raps}, and the least ambiguous set-valued classifier (LAC)~\cite{sadinle2019} as standard scores.
In the inductive setting, weighted conformal scores account for network dependence~\cite{clarkson2023distribution}.
In the transductive setting, exchangeability holds when the model uses no calibration or test labels and the split is uniform~\cite{daps,cfgnn}, and score diffusion preserves it~\cite{daps}.
DAPS (diffusion APS)~\cite{daps} diffuses nonconformity scores over edges, SNAPS~\cite{snaps} adds feature similarity and neighborhood information, and CF-GNN~\cite{cfgnn} learns a graph-based correction network.
HeAD-CP~\cite{headcp} derives node- and edge-wise diffusion coefficients from GNN softmax outputs, which a frozen TFM posterior could also supply.
HG-DAPS instead sets one graph-level diffusion weight from the adjusted homophily of the observed context labels, with no tuning.

\textbf{Calibration of graph models.}
Temperature scaling on held-out data is a standard remedy for miscalibrated neural networks~\cite{guo2017}.
CaGCN learns node-dependent calibration functions from neighboring confidence patterns~\cite{wang2021cagcn}, and GATS estimates node-wise temperatures with graph attention~\cite{hsu2022gats}.
Both need held-out labels and target trained GNNs, whereas we audit a frozen TabICL posterior with no post-hoc step.

% =====================================================================
\section{Setting and Exact Finite-Sample Validity}
\label{sec:lemma}

\subsection{Setting}
Let $G=(V,E)$ be an undirected graph with node features $X\in\mathbb{R}^{N\times d}$ and labels $y_v\in[K]=\{1,\dots,K\}$.
A random subset $C\subset V$ (the \emph{context}, drawn with a class-balance check, Section~\ref{sec:setup}) has observed labels $y_C$, and the remaining nodes $U=V\setminus C$ form the \emph{evaluation pool}.
Each node is mapped to a table row
\begin{align*}
\phi(v)=\big[\,&\mathrm{SVD}_{64}(x_v),\ \mathrm{SVD}_{64}(\bar x^{(1)}_v),\
\mathrm{SVD}_{64}(\bar x^{(2)}_v),\\
&\text{aggr},\ \log(1{+}\deg v)\,\big],
\end{align*}
where $\bar x^{(r)}_v$ is the mean of features over the $r$-hop neighborhood, aggr holds the one-hop maximum and minimum of the reduced features, and every singular value decomposition (SVD) is fit on all nodes using no labels.
A \emph{frozen} TFM $f$ reads a prompt of at most 5,000 context rows $(\phi(c),y_c)$ and returns by in-context learning, at softmax temperature $1$ and with no parameter update, a posterior $\hat p(\,\cdot\mid v)\in\Delta^{K-1}$ for every $v\in U$.

As nonconformity score we use APS~\cite{aps}.
With classes sorted by decreasing $\hat p(k\mid v)$, the score of label $k$ is the probability mass ranked before $k$ plus a randomized share of its own mass.
One uniform draw per row is made once and cached, so the score matrix $S\in[0,1]^{|U|\times K}$ of the evaluation pool is fixed before any split.
Given a calibration set $\mathcal{D}_{\mathrm{cal}}\subset U$ of size $n_{\mathrm{cal}}$ and a level $\alpha$, split conformal prediction takes the threshold
\begin{equation}
\hat q \;=\; s_{(j)},\qquad j=\lceil (n_{\mathrm{cal}}+1)(1-\alpha)\rceil,
\label{eq:qhat}
\end{equation}
the $j$-th smallest calibration score $s_i=S_{i,y_i}$ ($\hat q=\infty$ if $j>n_{\mathrm{cal}}$), and predicts $\hat C(v)=\{k: S_{v,k}\le \hat q\}$ for every test node.

\subsection{Valid for free}
Every component that produces the score matrix is \emph{fixed before any calibration data is chosen}: the graph, the features, the featurization, the context and its prompt, the frozen model with its inference seeds, and the APS draws.
Validity therefore needs no assumption about the data distribution, the model, or the graph, only a uniformly random calibration/test split of $U$.
Validity is thus free of training, of a validation fold, and of hyperparameter selection, though not free of labels, since the context and the calibration set are labeled.

\begin{lemma}[Exact finite-sample validity]
\label{lem:free}
Condition on $G$, $X$, all labels, the context $C$ with its prompt, the frozen predictor $f$ with its inference seeds, and the APS draws, so that the pairs $\{(S_{v,\cdot},\,y_v)\}_{v\in U}$ form a fixed finite population.
Let $\mathcal{D}_{\mathrm{cal}}$ be a uniformly random subset of $U$ of size $n_{\mathrm{cal}}$, and let the test node $v^\ast$ be drawn uniformly from $U\setminus\mathcal{D}_{\mathrm{cal}}$.
Then, with $\hat q$ as in \eqref{eq:qhat},
\[
\Pr\big(y_{v^\ast}\in \hat C(v^\ast)\big)\;\ge\;1-\alpha ,
\]
where the probability is over the random split alone.
\end{lemma}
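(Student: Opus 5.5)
The plan is the standard exchangeability argument for split conformal prediction, applied to the finite population $U$. The first step is to reformulate the random split. Drawing $\mathcal{D}_{\mathrm{cal}}$ uniformly of size $n=n_{\mathrm{cal}}$ and then $v^\ast$ uniformly from the rest is the same as drawing a uniformly random ordered $(n+1)$-tuple of distinct nodes from $U$. Equivalently, we draw a uniform random $(n+1)$-subset $T\subset U$ and then choose, uniformly within $T$, which element is the test node. Because everything producing $S$ is fixed by the conditioning, each node $v$ carries a deterministic score $s_v=S_{v,y_v}$. The split changes only which nodes play which role, never the scores themselves. This is where the hypothesis that $f$, $C$, the prompt and the APS draws are fixed before the split enters the argument.

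The second step conditions on the unordered set $T$. Given $T$, the test node is uniform over its $n+1$ elements, so $s_{v^\ast}$ is a uniformly chosen entry of the multiset $\{s_v:v\in T\}$. Coverage reduces to the rank statement
\[
y_{v^\ast}\in\hat C(v^\ast)\iff s_{v^\ast}\le \hat q = s_{(j)},
\]
where $s_{(j)}$ is the $j$-th smallest of the $n$ calibration scores. The key deterministic fact is the following: if $s_{v^\ast}$ is among the $j$ smallest values of the full multiset of $n+1$ scores (ranked with any fixed tie-breaking), then $s_{v^\ast}\le s_{(j)}$. To check it, suppose $s_{v^\ast}>s_{(j)}$. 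Then at least $j$ calibration scores are strictly below $s_{v^\ast}$, so $v^\ast$ would have rank greater than $j$ in the full multiset.

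A uniformly chosen element has rank at most $j$ with probability $j/(n+1)\ge 1-\alpha$, since $j=\lceil (n+1)(1-\alpha)\rceil$. The case $j>n$ gives $\hat q=\infty$ and coverage $1$. Averaging over $T$ then yields the unconditional bound over the split alone.

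The step I expect to need the most care is ties. APS scores with a cached uniform draw are almost surely distinct, but after conditioning on the draws the scores are fixed numbers and could coincide. I would handle this by ranking the $n+1$ scores with a fixed tie-breaking rule (for example by node index) and checking that the implication above still goes through. It does: ties can only help, because $s_{v^\ast}\le s_{(j)}$ remains true whenever the rank of $v^\ast$ is at most $j$. This is why the result is an inequality and not an equality.
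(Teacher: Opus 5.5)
Your proposal is correct and follows the same rank argument as the paper: the $n_{\mathrm{cal}}+1$ true-label scores are a uniform draw from the fixed population, the rank of $s_{v^\ast}$ is uniform on $\{1,\dots,n_{\mathrm{cal}}+1\}$, and a rank of at most $j$ forces $s_{v^\ast}\le s_{(j)}=\hat q$. The only difference is presentational: you get the uniform rank by conditioning on the unordered set $T$ and breaking ties deterministically by node index, whereas the paper appeals to exchangeability of the scores and breaks ties uniformly at random.
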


\begin{proof}
The $n_{\mathrm{cal}}+1$ true-label scores $(s_1,\dots,s_{n_{\mathrm{cal}}},s_{v^\ast})$ are a uniform sample without replacement from the fixed population $\{S_{v,y_v}\}_{v\in U}$, so their joint law is invariant under permutation and they are exchangeable.
Break ties uniformly at random and let $R$ be the rank of $s_{v^\ast}$ among the $n_{\mathrm{cal}}+1$ scores.
By exchangeability $R$ is uniform on $\{1,\dots,n_{\mathrm{cal}}+1\}$.
If $R\le j$, at most $j-1$ calibration scores lie strictly below $s_{v^\ast}$, so $s_{v^\ast}\le s_{(j)}=\hat q$ and $y_{v^\ast}\in\hat C(v^\ast)$.
Hence the coverage probability is at least $\Pr(R\le j)=j/(n_{\mathrm{cal}}+1)\ge 1-\alpha$.
\end{proof}

Lemma~\ref{lem:free} is the finite-population form of Proposition~1 of~\cite{daps} and of the transductive exchangeability argument of~\cite{cfgnn}.
Because it conditions on the frozen predictor, it needs no permutation-equivariance assumption on the model.

\begin{remark}[No upper bound under ties]
\label{rem:ties}
The familiar upper bound $1-\alpha+\tfrac{1}{n_{\mathrm{cal}}+1}$ requires pairwise distinct true-label scores in $U$.
When the posterior takes few distinct values, the non-randomized LAC score $1-\hat p(y\mid v)$ has heavy ties.
This happens for the SVD-only ablation of TabICL on Minesweeper, whose raw features are one-hot counts of neighboring mines.
There LAC coverage reaches $0.96$ at $\alpha=0.10$.
This paper claims only the lower bound.
\end{remark}

\begin{remark}[Trained GNNs]
Lemma~\ref{lem:free} applies to any predictor that depends only on $(G,X,C,y_C)$ and on randomness independent of the calibration/test split, including a GNN trained and early-stopped on the context, as our baselines are.
For a trained model this premise has to be verified by auditing the training protocol, and a validation fold consumes context labels.
The training-free pipeline satisfies it with no training protocol to audit.
\end{remark}

% =====================================================================
\section{HG-DAPS: Homophily-Gated Diffusion Scores}
\label{sec:hgdaps}

% ---------------------------------------------------------- Table I
\begin{table*}[t]\centering\footnotesize
\caption{Training-free TabICL node classification at $\alpha=0.10$, mean$\pm$sd over 3 context seeds. Acc is TabICL accuracy. Coverage and set sizes are averaged over 20 calibration/test resplits per seed. ECE is compared with GCN+TS, the pre-registered comparator. Set sizes of APS and HG-DAPS share the same 0.90 coverage guarantee. Datasets are sorted by the context estimate $\hat h_{\mathrm{adj}}$.}
\label{tab:main}
\begin{tabular}{lrcccccc}\toprule
Dataset & $\hat h_{\mathrm{adj}}$ & Acc & ECE (TabICL) & ECE (GCN+TS) & Cov (APS) & Size (APS) & Size (HG-DAPS) \\ \midrule
Photo & 0.79 & 0.960{\scriptsize$\pm$0.004} & 0.007{\scriptsize$\pm$0.001} & 0.017{\scriptsize$\pm$0.004} & 0.901{\scriptsize$\pm$0.002} & 1.047{\scriptsize$\pm$0.009} & 0.956{\scriptsize$\pm$0.004} \\
Cora & 0.79 & 0.868{\scriptsize$\pm$0.005} & 0.028{\scriptsize$\pm$0.004} & 0.034{\scriptsize$\pm$0.004} & 0.901{\scriptsize$\pm$0.004} & 1.362{\scriptsize$\pm$0.022} & 1.151{\scriptsize$\pm$0.018} \\
Coauthor-CS & 0.78 & 0.934{\scriptsize$\pm$0.001} & 0.008{\scriptsize$\pm$0.004} & 0.007{\scriptsize$\pm$0.002} & 0.900{\scriptsize$\pm$0.001} & 1.101{\scriptsize$\pm$0.014} & 0.988{\scriptsize$\pm$0.004} \\
PubMed & 0.68 & 0.877{\scriptsize$\pm$0.005} & 0.010{\scriptsize$\pm$0.002} & 0.014{\scriptsize$\pm$0.002} & 0.900{\scriptsize$\pm$0.001} & 1.198{\scriptsize$\pm$0.009} & 1.128{\scriptsize$\pm$0.017} \\
CiteSeer & 0.64 & 0.757{\scriptsize$\pm$0.011} & 0.041{\scriptsize$\pm$0.003} & 0.054{\scriptsize$\pm$0.014} & 0.902{\scriptsize$\pm$0.003} & 1.921{\scriptsize$\pm$0.060} & 1.719{\scriptsize$\pm$0.063} \\
WikiCS & 0.59 & 0.860{\scriptsize$\pm$0.003} & 0.015{\scriptsize$\pm$0.000} & 0.040{\scriptsize$\pm$0.004} & 0.900{\scriptsize$\pm$0.002} & 1.482{\scriptsize$\pm$0.022} & 1.229{\scriptsize$\pm$0.002} \\
Amazon-Ratings & 0.14 & 0.485{\scriptsize$\pm$0.006} & 0.021{\scriptsize$\pm$0.002} & 0.023{\scriptsize$\pm$0.003} & 0.899{\scriptsize$\pm$0.003} & 2.875{\scriptsize$\pm$0.021} & 2.856{\scriptsize$\pm$0.010} \\
Tolokers & 0.10 & 0.827{\scriptsize$\pm$0.003} & 0.016{\scriptsize$\pm$0.003} & 0.046{\scriptsize$\pm$0.003} & 0.900{\scriptsize$\pm$0.003} & 1.298{\scriptsize$\pm$0.012} & 1.288{\scriptsize$\pm$0.010} \\
Minesweeper & 0.01 & 0.850{\scriptsize$\pm$0.005} & 0.010{\scriptsize$\pm$0.002} & 0.020{\scriptsize$\pm$0.004} & 0.899{\scriptsize$\pm$0.001} & 1.235{\scriptsize$\pm$0.003} & 1.234{\scriptsize$\pm$0.003} \\
Actor & 0.00 & 0.362{\scriptsize$\pm$0.007} & 0.031{\scriptsize$\pm$0.004} & 0.036{\scriptsize$\pm$0.008} & 0.901{\scriptsize$\pm$0.003} & 3.406{\scriptsize$\pm$0.021} & 3.406{\scriptsize$\pm$0.021} \\
\bottomrule\end{tabular}\end{table*}

Diffusing nonconformity scores over the graph sharpens prediction sets when neighbors tend to share labels.
On heterophilous graphs the same smoothing mixes scores of unlike classes and can enlarge the sets~\cite{headcp}.
DAPS selects its diffusion weight by tuning on labeled nodes~\cite{daps}, and HeAD-CP adapts node-wise coefficients from model predictions~\cite{headcp}.
HG-DAPS decides whether to diffuse from the observed context labels, so the decision does not depend on the calibration split.

\subsection{A context-only homophily gate}
Let $E_C\subseteq E$ be the set of \emph{intra-context} edges, whose endpoints both lie in $C$, and let $m=|E_C|$.
All quantities below use only $E_C$ and $y_C$.
The raw edge homophily $h_{\mathrm{edge}}$ is the fraction of edges in $E_C$ whose endpoints share a label.
We use the adjusted homophily of Platonov et al.~\cite{platonov},
\begin{equation}
\hat h_{\mathrm{adj}}
=\frac{h_{\mathrm{edge}}-P}{1-P},\qquad P=\sum_{k=1}^{K}p_k^{2},
\label{eq:hadj}
\end{equation}
where $p_k$ is the fraction of the $2m$ endpoints of $E_C$ that carry label $k$.
Here $P$ is the same-label rate expected when edge endpoints are matched at random with their labels.
So $\hat h_{\mathrm{adj}}$ is about $0$ when edges ignore labels, whatever the class balance, and it equals $1$ only when $h_{\mathrm{edge}}=1$.
Since $\hat h_{\mathrm{adj}}$ is a ratio estimate from $m$ edges, we shrink it toward $0$ when the context has few intra-context edges:
\begin{equation}
\delta \;=\; \delta_{\max}\,
\mathrm{clip}_{[0,1]}\!\Big(\hat h_{\mathrm{adj}}\cdot\frac{m}{m+m_0}\Big),
\label{eq:gate}
\end{equation}
with $\delta_{\max}=0.5$ and $m_0=100$ set a priori in the locked configuration.
We set $\delta=0$ when $m=0$ or $P=1$, and then, as whenever $\hat h_{\mathrm{adj}}\le 0$, HG-DAPS reduces exactly to APS.
Two ablations use the same form: the raw-homophily gate replaces $\hat h_{\mathrm{adj}}$ by $h_{\mathrm{edge}}$ in \eqref{eq:gate}, and ungated DAPS fixes $\delta=\delta_{\max}$.

\subsection{Gated diffusion of the score matrix}
Extend the score matrix to all of $V$ as $S^{\mathrm{full}}\in[0,1]^{|V|\times K}$.
A context row is the one-hot LAC score of its \emph{known} label, $S^{\mathrm{full}}_{c,k}=\mathbf{1}[k\neq y_c]$, so known labels enter the diffusion as certain evidence, and evaluation rows keep their cached APS scores.
With adjacency matrix $A$ and degree matrix $D$, one gated diffusion step gives
\begin{equation}
\tilde S \;=\;(1-\delta)\,S^{\mathrm{full}}
+\delta\,D^{-1}\!A\,S^{\mathrm{full}} ,
\label{eq:diff}
\end{equation}
and HG-DAPS calibrates the threshold \eqref{eq:qhat} on the true-label entries of $\tilde S$ in $U$ and forms prediction sets from the same matrix.
Nodes of degree zero keep their own row.
The sets remain nested in $\hat q$ but need not be prefixes of the TabICL ranking.
The cost is one sparse multiply, $O(|E|\,K)$, on top of one cached forward pass of the frozen model.

\begin{corollary}[Validity is preserved]
\label{cor:valid}
The map $S^{\mathrm{full}}\mapsto\tilde S$ is a fixed function of $(A,\delta)$, $S^{\mathrm{full}}$ depends only on $(S,y_C)$, and $\delta$ in \eqref{eq:gate} depends only on $(E_C,y_C)$.
All are determined before the calibration set is drawn, so $\{(\tilde S_{v,\cdot},y_v)\}_{v\in U}$ is again a fixed population and Lemma~\ref{lem:free} applies verbatim to HG-DAPS.
Coverage of at least $1-\alpha$ holds for every graph and every gate value, including a misestimated one.
\end{corollary}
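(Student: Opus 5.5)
The plan is to reduce Corollary~\ref{cor:valid} to Lemma~\ref{lem:free}. I will check that, once we condition on the same objects as in that lemma, the diffused true-label scores $\{\tilde S_{v,y_v}\}_{v\in U}$ form a fixed finite population. The lemma's proof never used the APS form of the scores. It used only two facts: the score matrix is determined before the split, and the calibration set and test node are drawn uniformly from $U$. So it suffices to show that $\tilde S$ is a measurable function of quantities that are independent of the random split.

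\textbf{Step 1 (the gate).} By \eqref{eq:hadj} and \eqref{eq:gate}, $\delta$ is a deterministic function of $E_C$ and $y_C$. Here $E_C$ is determined by $E$ and $C$. The quantities $h_{\mathrm{edge}}$, $p_k$, $P$, $m$ and the fixed constants $\delta_{\max},m_0$ are all computed from intra-context edges and context labels. The degenerate cases $m=0$ or $P=1$ are handled by the fixed rule $\delta=0$. Hence $\delta$ is constant given the conditioning $\sigma$-field.

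\textbf{Step 2 (the extended scores).} The rows of $S^{\mathrm{full}}$ on $U$ are the cached APS rows $S$. By Lemma~\ref{lem:free}'s conditioning these are already fixed. The rows on $C$ are $\mathbf{1}[k\neq y_c]$, which are fixed given $y_C$. No label from $U$ enters $S^{\mathrm{full}}$. This is the point to stress: calibration labels never appear as one-hot rows.

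\textbf{Step 3 (diffusion and application of the lemma).} Equation \eqref{eq:diff} defines $\tilde S$ as the deterministic map $(A,\delta,S^{\mathrm{full}})\mapsto(1-\delta)S^{\mathrm{full}}+\delta D^{-1}AS^{\mathrm{full}}$, with the fixed convention for degree-zero nodes. So $\tilde S$ is fixed, and the pairs $(\tilde S_{v,\cdot},y_v)$ for $v\in U$ form a fixed population. We then rerun the exchangeability and rank argument of Lemma~\ref{lem:free} with $s_i=\tilde S_{i,y_i}$ and $\hat C(v)=\{k:\tilde S_{v,k}\le\hat q\}$. The scores now lie in $[0,1]$ rather than being APS scores, but the argument only needs real values, with ties broken at random. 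This gives coverage $\ge 1-\alpha$.

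\textbf{Uniformity over $\delta$.} The bound holds conditionally for every value of $\delta$ and every graph. Nothing in the argument uses whether $\delta$ is a good estimate, so the bound holds for any gate value, including a misestimated one. It also survives unconditioning over $C$ and the model randomness.

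The main obstacle, mild as it is, is Step 2. One must verify that the split is drawn \emph{after} $S^{\mathrm{full}}$ and $\delta$ are formed, and that $\tilde S_{v,\cdot}$ for $v\in U$ depends on its neighbors in $U$ only through their APS rows and not through their labels. Diffusion couples the rows of different nodes, so the scores are no longer functions of each node alone. This coupling is harmless because exchangeability in Lemma~\ref{lem:free} comes from uniform sampling out of a fixed population, not from independence across nodes.
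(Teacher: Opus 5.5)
Your proposal is correct and follows the paper's own argument: $\delta$ depends only on $(E_C,y_C)$, $S^{\mathrm{full}}$ only on $(S,y_C)$, and $\tilde S$ is a fixed function of these and $A$. The diffused scores therefore form a fixed population before the split, and Lemma~\ref{lem:free} applies verbatim. Your extra points, that no label from $U$ enters the one-hot rows and that coupling across rows is harmless because exchangeability comes from uniform sampling, are accurate elaborations of the same reasoning.
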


Corollary~\ref{cor:valid} mirrors Proposition 2 of~\cite{daps} for a gated weight.
Only \emph{efficiency} (set size) and \emph{conditional} coverage across node groups depend on the gate.
This motivates the pre-registered trap-case hypothesis H4.
On Minesweeper and Tolokers, labels are binary and imbalanced, so $h_{\mathrm{edge}}$ is high while the mean $\hat h_{\mathrm{adj}}$ is at most 0.10.
A gate built on $h_{\mathrm{edge}}$ turns diffusion on and is expected to lower coverage for low-homophily nodes, whereas the adjusted gate \eqref{eq:gate} stays near zero.
Marginal coverage stays at least $1-\alpha$ in both cases, so an audit of marginal coverage alone would miss the failure.

\section{Experimental Setup}\label{sec:setup}

% ---------------------------------------------------------- Result figures (2, 3, 4), full width
\begin{figure*}[!t]
\centering
\includegraphics[width=0.80\textwidth]{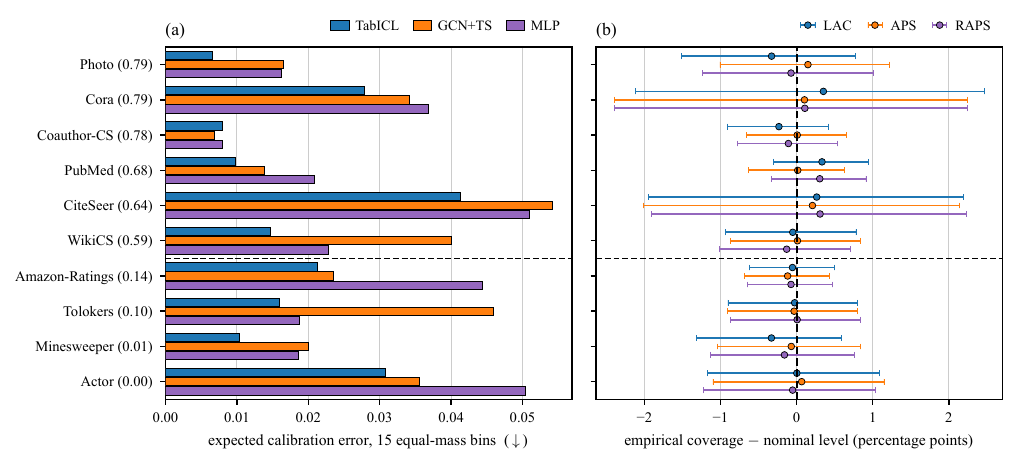}
\caption{Reliability audit at $\alpha=0.10$. Datasets are ordered by the context estimate $\hat h_{\mathrm{adj}}$ (in parentheses), and the dashed rule separates homophilous ($\hat{h}_{\mathrm{adj}}>0.5$) from heterophilous ($\hat{h}_{\mathrm{adj}}<0.2$) graphs. (a)~ECE with 15 equal-mass bins~\cite{nixon2019} for TabICL, GCN+TS, and MLP, with the TabICL and GCN+TS values in Table~\ref{tab:main}. (b)~Marginal coverage of LAC, APS, and RAPS on the TabICL posterior, averaged over 60 calibration/test resplits, as deviation from the nominal $0.90$. Error bars are exact 95\% Clopper-Pearson intervals~\cite{clopper1934} at each mean for $n_{\mathrm{test}}$ trials. They are sized for one test set, not for the 60-resplit mean, and every error bar covers the nominal level.}
\label{fig:audit}

\vspace{0.6em}

\includegraphics[width=\textwidth]{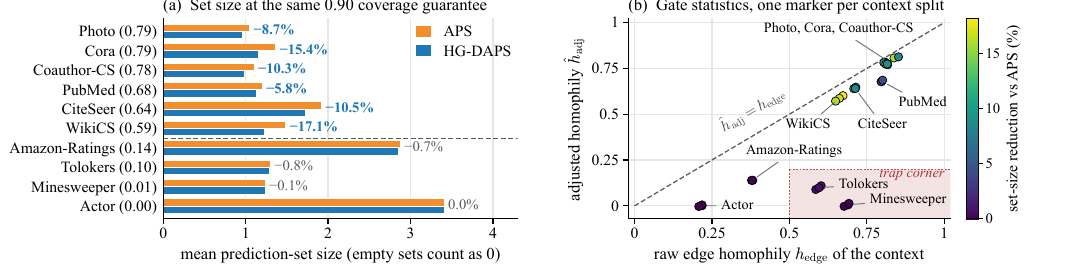}
\caption{HG-DAPS at $\alpha=0.10$, ordered as in Fig.~\ref{fig:audit}. (a)~Mean set size of APS and HG-DAPS at the same coverage guarantee, with the relative change. (b)~Gate statistics per context split, colored by set-size reduction. By \eqref{eq:hadj}, the gap below the diagonal $\hat h_{\mathrm{adj}}=h_{\mathrm{edge}}$ is $P(1-h_{\mathrm{edge}})/(1-P)$, which grows with class imbalance at fixed $h_{\mathrm{edge}}$. The shaded trap corner ($h_{\mathrm{edge}}>0.5$, $\hat h_{\mathrm{adj}}<0.2$) holds both trap datasets.}
\label{fig:method}

\vspace{0.6em}

\includegraphics[width=\textwidth]{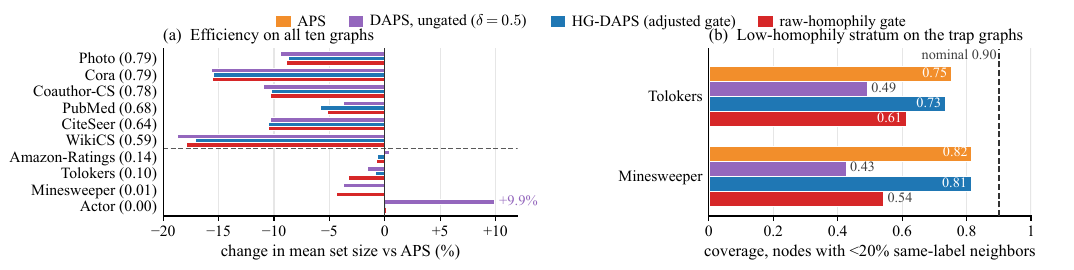}
\caption{Gate ablation at $\alpha=0.10$, ordered as in Fig.~\ref{fig:audit}. Ungated DAPS and the raw-homophily gate are defined after \eqref{eq:gate}. (a)~Change in mean set size relative to APS. (b)~Coverage in the low-homophily stratum (fewer than 20\% same-label neighbors) on the two trap datasets. Their mean marginal coverage over 60 resplits is 0.897 to 0.901 for all four methods, which hides the drop.}
\label{fig:ablation}
\end{figure*}

\textbf{Datasets.}
We use ten standard node classification benchmarks, with context estimates $\hat h_{\mathrm{adj}}$ from $0.00$ to $0.79$.
Cora, CiteSeer, and PubMed~\cite{sen2008}, Photo and Coauthor-CS~\cite{shchur2018}, and WikiCS~\cite{mernyei2020} are homophilous ($\hat h_{\mathrm{adj}}>0.5$).
Actor~\cite{pei2020}, Amazon-Ratings, Minesweeper, and Tolokers~\cite{platonovbench} are heterophilous ($\hat h_{\mathrm{adj}}<0.2$).

\textbf{Protocol.}
For each dataset we draw three $50\%$ context splits $C$, and the remaining nodes form $U$.
Each context receives 20 calibration/test resplits of $U$, giving 60 resplits per dataset, with $n_{\mathrm{cal}}=\min(1000,\lfloor|U|/2\rfloor)$ and $n_{\mathrm{test}}=|U|-n_{\mathrm{cal}}$.
All reported conformal results use $\alpha=0.10$.
The low-homophily stratum holds the evaluation nodes with fewer than 20\% same-label neighbors, a label-based quantity used for evaluation only.

\textbf{Models.}
The primary model is TabICL on the feature table $\phi$.
On Amazon-Ratings, Coauthor-CS, PubMed, Tolokers, and WikiCS, the context holds more than 5,000 nodes, so the TabICL prompt uses only part of the context labels.
A feature-only ablation (SVD-only) runs the same frozen TabICL on the SVD block of the raw node features alone.
The trained baselines are a graph convolutional network (GCN)~\cite{kipf2017}, a graph attention network (GAT)~\cite{velickovic2018}, and GCN with temperature scaling (GCN+TS), tuned on a validation fold carved from the context.
A multilayer perceptron (MLP) and logistic regression use the same feature table.
Label propagation~\cite{zhu2002} completes the eight models.

\textbf{Safeguards.}
Every random draw is seeded deterministically from the dataset name, the context seed, and the purpose of the draw.
The context split is redrawn when its class frequencies deviate by more than 0.05 from those of the full graph.
The validation fold and the prompt are checked in the same way against the context.
This check selects $C$ using labels outside it, but Lemma~\ref{lem:free} conditions on $C$, so validity is unaffected.
The calibration/test resplits are pure uniform draws, as the lemma requires.
The four hypotheses (H1 audit, H2 validity, H3 method, H4 trap case), their thresholds, and the decision rule were fixed in a locked configuration before the full run.
Inference runs once per context split on one 8\,GB consumer GPU, and all later CPU steps take seconds.

% =====================================================================

\section{Results}
\label{sec:results}

\noindent\hspace*{\parindent}\textbf{Audit (H1).}
In the pre-registered comparison, TabICL has lower ECE than GCN+TS on 9 of the 10 datasets (Table~\ref{tab:main}, Fig.~\ref{fig:audit}(a)).
The exception is Coauthor-CS, the only graph with more than ten classes, where GCN+TS reaches 0.007 against 0.008.
On Actor, Amazon-Ratings, and CiteSeer, the ECE advantage of TabICL is smaller than the standard deviation of GCN+TS over context seeds.
In a secondary comparison with all trained baselines, TabICL has the lowest ECE on 8 of the 10 datasets.
Its mean ECE over the ten datasets is 0.019, about 35\% below the 0.029 of GCN+TS and of MLP, with no task-specific training and no post-hoc calibration.
It also attains the highest mean accuracy, 0.778 against 0.752 for GCN.
On the binary Minesweeper and Tolokers, its accuracy is about 0.05 above the majority-class rate.
Its SVD-only ablation has lower ECE on three datasets, at lower mean accuracy (0.733).
Label propagation is poorly calibrated, with mean ECE 0.120.\\
\hspace*{\parindent}\textbf{Validity (H2).}
The pre-registered check covers 180 cells of the TabICL posterior: 10 datasets, 3 context seeds, and 6 scores (LAC, APS, RAPS, ungated DAPS, HG-DAPS, and the raw-homophily gate).
In every cell, the mean coverage over the 20 resplits of that seed lies inside the exact two-sided Clopper-Pearson 95\% interval at 0.90 for $n_{\mathrm{test}}$ trials.
The lowest cell mean is 0.894 (Minesweeper, LAC).
The band treats one test set as binomial and ignores calibration variability, so it is too strict for a single resplit and loose for a mean over 20 resplits.
It is therefore only a coarse sanity check, and its outcome is consistent with Lemma~\ref{lem:free} (pooled means in Fig.~\ref{fig:audit}(b)).
Outside the pre-registered cells, LAC on the SVD-only ablation exceeds the upper edge on Minesweeper, the tie-induced overcoverage of Remark~\ref{rem:ties}.\\
\hspace*{\parindent}\textbf{Method (H3).}
The pre-registered H3 contrast covers five homophilous graphs.
On all five, HG-DAPS reduces mean set size relative to APS by 5.8\% (PubMed) to 17.1\% (WikiCS), and it does so by 10.3\% on Coauthor-CS (Table~\ref{tab:main}, Fig.~\ref{fig:method}(a)).
On these six graphs its mean coverage over 60 resplits stays within 0.2 percentage points of that of APS.
On Photo and Coauthor-CS, accuracy above 0.90 forces APS, and a mean size below 1 forces HG-DAPS, to output some empty sets, which count as size 0 and may explain part of the reduction.
On the four heterophilous graphs the adjusted gate gives $\delta\le 0.07$ and mean set size changes by less than 1\%, while ungated diffusion enlarges Actor sets by 9.9\% (Fig.~\ref{fig:ablation}(a)).
On the six homophilous graphs, ungated DAPS shrinks the sets by a similar average amount (11.5\% against 11.3\%), so there the gate mainly decides whether to diffuse.\\
\hspace*{\parindent}\textbf{Trap case (H4).}
On Minesweeper and Tolokers the mean $h_{\mathrm{edge}}$ is 0.69 and 0.59 (Fig.~\ref{fig:method}(b)), so the raw gate diffuses with $\delta=0.34$ and $0.30$, whereas the adjusted gate gives $\delta=0.003$ and $0.05$.
Marginal coverage stays at the nominal 0.90, but coverage in the low-homophily stratum falls by 0.27 and 0.12 relative to the adjusted gate (Fig.~\ref{fig:ablation}(b)).
The raw gate also shrinks mean set size relative to APS by 4.4\% and 3.3\%, against 0.1\% and 0.8\% for HG-DAPS, so an efficiency audit would favor it.
All methods under-cover this stratum, with APS at 0.82 and 0.75, and the adjusted gate stays within 0.02 of APS, so it limits the harm without restoring coverage.\\
\hspace*{\parindent}\textbf{Scope and limitations.}
Lemma~\ref{lem:free} and Corollary~\ref{cor:valid} hold for any frozen predictor, but the ECE and set-size results are specific to TabICL.
We did not evaluate TabPFN v2, GraphPFN~\cite{graphpfn}, or HeAD-CP~\cite{headcp} on the same posterior.
The guarantee averages over calibration draws, so one fixed calibration set can realize coverage below $1-\alpha$.
Set sizes are compared with APS only, not with LAC, RAPS, or DAPS with a context-tuned weight, and empty-set rates are not reported.
Stratum coverage is reported only on the trap graphs, although HG-DAPS diffuses on homophilous graphs with weights such as $\delta=0.29$ on CiteSeer (Fig.~\ref{fig:overview}), close to the raw-gate weights of H4.
We did not vary $\delta_{\max}$, $m_0$, or the 50\% context fraction, and no graph has $\hat h_{\mathrm{adj}}$ between 0.2 and 0.5.
We also did not evaluate class-specific calibration~\cite{sadinle2019}, multi-hop diffusion, or a variant that separates the one-hot context rows, which DAPS does not use~\cite{daps}, from score smoothing.

% =====================================================================
\section{Conclusion}
\label{sec:conclusion}

A frozen in-context predictor makes split conformal prediction exactly valid in finite samples with no training protocol to audit (Lemma~\ref{lem:free}).
On ten graphs, training-free TabICL has lower ECE than GCN+TS on nine, and HG-DAPS reduces mean set size relative to APS on the six homophilous ones.
Its adjusted gate avoids the extra loss of 0.12 to 0.27 in low-homophily coverage that raw-homophily gating causes on the two trap graphs.
Whether HG-DAPS lowers that coverage inside homophilous graphs is the main open question.

\bibliographystyle{IEEEtran}
\bibliography{refs}

\end{document}